%% file: avr_arxiv.tex
\documentclass[10pt,twocolumn,letterpaper]{article}

\usepackage[pagenumbers]{wacv}

\usepackage{colortbl}
\usepackage{multirow}
\usepackage{bm}

\usepackage{pifont}
\newcommand{\cmark}{\ding{51}}
\newcommand{\xmark}{\ding{55}}
\usepackage{fontawesome5}
\definecolor{evgreen}{HTML}{5AA86F}
\definecolor{genblue}{HTML}{4A8FD4}
\definecolor{ourcoral}{HTML}{E8825A}
\newcommand{\icoDetect}{\textcolor{ourcoral}{\faSearch}}
\newcommand{\icoUser}{\textcolor{black!55}{\faUser}}
\newcommand{\icoGiven}{\textcolor{black!55}{\faHandPaper}}
\newcommand{\icoClip}{\textcolor{evgreen}{\faFilm}}
\newcommand{\icoGen}{\textcolor{genblue}{\faMagic}}

\definecolor{wacvblue}{rgb}{0.21,0.49,0.74}
\usepackage[pagebackref,breaklinks,colorlinks,allcolors=wacvblue]{hyperref}

\usepackage{amsthm}
\newtheorem{proposition}{Proposition}
\newtheorem{corollary}{Corollary}

\graphicspath{{figures/}{./}}

\def\confName{WACV}
\def\confYear{2027}

\title{Copy What Is Seen, Generate What Is Not:\\Training-Free Anomaly-Aware Video Restoration}

\author{Zhida Qu$^{1}$ \quad Shengchao Chen$^{2,}$\thanks{Corresponding author.}\\[3pt]
{\normalsize $^{1}$Department of Computer Science and Engineering, Tandon School of Engineering, New York University}\\
{\normalsize $^{2}$Australian AI Institute, University of Technology Sydney}\\
{\tt\small zq2195@nyu.edu \quad shengchao.chen.uts@gmail.com}
}

\begin{document}
\maketitle

\begin{abstract}
A surveillance system that detects an anomaly often has to repair the footage as well, yet the two tasks are studied in isolation: training-free anomaly detectors stop at a score or a label, while training-free video editing answers to a user prompt rather than to a detector. This paper proposes AVR (Anomaly-aware Video Restoration), which closes that gap with frozen pretrained models alone and generates content only where the clip offers no evidence to copy. Motion evidence first gates open-vocabulary proposals into spatio-temporal masks. A background prior computed from the clip then fills every pixel the anomaly ever uncovers, leaving diffusion to synthesize only what no frame showed, and a frozen verifier decides per clip whether to trust a classical, a prior-anchored, or a background-conditioned restorer. Extensive experiments on three surveillance datasets, under both full-reference anomaly injection and real anomalies, show that AVR leads full-frame fidelity under oracle masks, matches three trained video inpainters inside the edited region, and outperforms a detect-then-generate pipeline on the masks it produces itself, while suppressing both the residual anomaly and the flicker of free diffusion.
\end{abstract}

\section{Introduction}
\label{sec:intro}

\begin{figure*}[tbh]
\centering
\includegraphics[width=.96\textwidth]{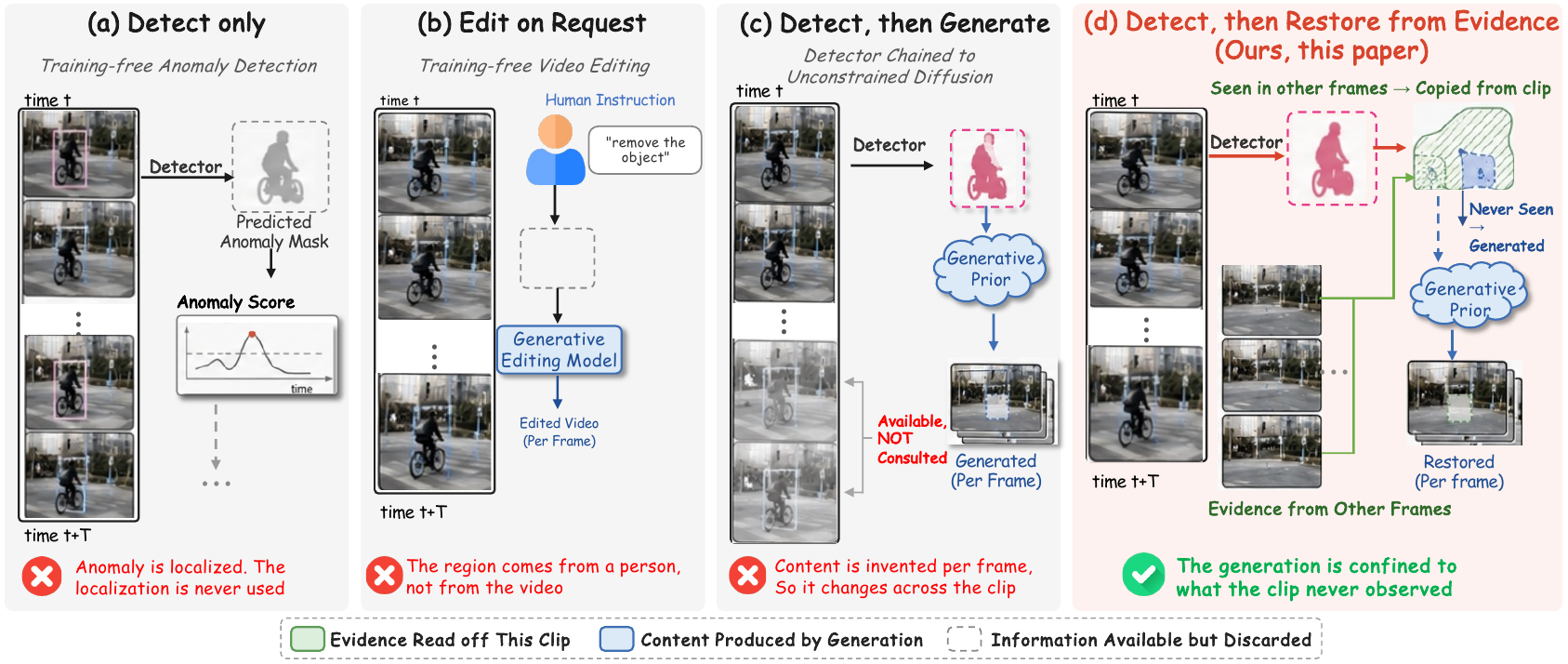}
\caption{\textbf{Comparison with existing paradigms.} (a) Detection localizes and stops, (b) editing needs a person to supply the region, and (c) chaining the two fills from a generative prior alone. (d)~AVR copies every pixel seen elsewhere and generates only what no frame observed.}
\label{fig:hero}
\end{figure*}

Video anomaly detection (VAD) is a core component of intelligent surveillance, elderly-care, and industrial-safety systems~\cite{iot_vad_survey,vad_tutorial_survey}. Over the past decade it has progressed from reconstruction- and prediction-based deep models~\cite{chen2022dynamic,chen2023tempee} to, most recently, paradigms built on vision--language and multimodal large language models~\cite{vad_10years,lavad}. In parallel, diffusion-based video generation and inpainting have reached a quality level at which detected anomalous content can, in principle, be removed and plausibly restored rather than merely flagged~\cite{text2video_zero,diffueraser}. A system that both detects and repairs anomalies must bring the two together.

Each capability already exists on its own, and only the interface between them is missing. Detection has become training-free through language-model reasoning~\cite{lavad,anomalyagent} or embedding geometry~\cite{spherevad}, and recent pipelines localize anomalies down to pixel level~\cite{tao,lavida_zs}, yet the output ends at a score or a mask while the anomaly stays in the video (Fig.~\ref{fig:hero}a). Editing is training-free as well, from per-video tuning~\cite{tuneavideo} to inference without any optimization~\cite{text2video_zero}, but every edit is steered by a prompt or a reference, so a person still decides what to change and where (Fig.~\ref{fig:hero}b). Video inpainting~\cite{propainter,e2fgvi} is closest, since flow-guided propagation already exploits frames where the background is visible, yet it starts from a mask that some other system must supply, and it treats every pixel as equally unknown.

Chaining the two components, however, fails in three ways (Fig.~\ref{fig:hero}c). First, a detector reports a score or a box rather than a region a generator can act on, so given the whole frame it rewrites parts that were never anomalous. Second, even with the right region, the generator has no reason to reproduce the background it occludes, and no constraint links what it invents in one frame to the next. Third, any repair that requires collecting footage and fine-tuning a model for the scene is unavailable in exactly the setting that motivates the task, where anomalies are rare. All three share one cause: the information that would resolve them is already present in the clip, yet each stage of the existing chain discards it before the next stage can use it.

To address these challenges, this paper proposes AVR (Anomaly-aware Video Restoration), a training-free framework that connects anomaly detection to anomaly-guided restoration using only frozen pretrained models, and that falls back on generation only where the clip provides no evidence to copy (Fig.~\ref{fig:hero}d). Specifically, an \emph{anomaly-aware localization} module gates open-vocabulary grounding by motion evidence, so proposals survive only where the video supports them. A \emph{conditional restoration} module then fills the masked regions from a temporal-median background prior and mask-conditioned diffusion, and a frozen verifier selects per clip among classical, prior-anchored and background-conditioned restorers. The two modules form an end-to-end detect-and-restore pipeline with no dataset-specific training or fine-tuning. Our contributions are:
\begin{itemize}
\itemsep0em
\item We propose AVR, which turns detected anomalies directly into restored video at inference time, with every component of it a frozen public checkpoint.
\item We design an anomaly-aware localization module that gates open-vocabulary proposals by motion evidence, with a fallback that keeps static anomalies detectable.
\item We design a conditional restoration module that anchors diffusion to a background prior and selects per clip among classical, prior-anchored and background-conditioned restorers with a single frozen, reference-free verifier.
\item Experiments on three surveillance benchmarks and real anomalies validate AVR against trained video inpainters and detect-then-generate pipelines, under both oracle masks and the detected masks AVR itself produces.
\end{itemize}

\section{Related Work}
\label{sec:related}

\paragraph{Training-Free Video Anomaly Detection.}
Classical VAD trains reconstruction- or prediction-based models on normal data~\cite{vad_10years}, from autoencoders that learn temporal regularity~\cite{hasan2016temporal} and memory modules that refuse to reconstruct unseen patterns~\cite{gong2019memae} to predictors scored by forecast error~\cite{shanghaitech}, and ranks snippets under multiple-instance supervision where only video-level labels exist~\cite{sultani2018realworld}. Recent work instead reuses frozen foundation models~\cite{chen2023prompt,chen2026fedal}: LAVAD~\cite{lavad} chains a captioner and a large language model to score anomalies without target-dataset training, AnomalyAgent~\cite{anomalyagent} brings agentic reasoning to zero-/few-shot detection beyond video, and SphereVAD~\cite{spherevad} performs geodesic inference on hypersphere embeddings, while resource-aware variants target edge deployment~\cite{iot_vad_survey,memovad}. Two recent pipelines push past scoring to pixel-level output, TAO~\cite{tao} by recasting detection as SAM2 tracking of anomalous objects and LAVIDA~\cite{lavida_zs} by training a SAM-initialized mask decoder on pseudo-anomalies so that no anomaly data is needed. Localization without target-domain labels is therefore within reach, through a tracker, a trained decoder or frozen checkpoints alone. Every one of these systems stops at finding the anomaly, however, and none of them repairs the footage it has flagged.

\paragraph{Training-Free Video Editing.}
Video editing has shifted from per-video optimization~\cite{tuneavideo} to tuning-free inference: Text2Video-Zero~\cite{text2video_zero} turns image diffusion models into zero-shot video generators, FateZero~\cite{fatezero} and TokenFlow~\cite{tokenflow} carry attention maps and diffusion features across frames, and AnyV2V~\cite{anyv2v} generalizes it to broad video-to-video tasks. All are steered by a prompt or a reference edit, so what to change, and where, still rests with a person.

\input{tables/tab_taxonomy}

\paragraph{Video Inpainting.}
Given a mask, video inpainting fills it coherently across time. Before learned models, region filling copied exemplar patches under a confidence-driven ordering~\cite{criminisi2004exemplar}, extended to space-time patches so that missing content was borrowed rather than invented~\cite{wexler2007spacetime}. Flow-guided propagation now dominates: E2FGVI~\cite{e2fgvi} makes flow completion, feature propagation and content hallucination end-to-end trainable, and ProPainter~\cite{propainter} strengthens propagation with dual-domain features and a mask-guided sparse transformer. Diffusion has since reached the same task, either by resampling a frozen denoiser inside the mask~\cite{lugmayr2022repaint} or by training for it: DiffuEraser~\cite{diffueraser} injects a propagated prior into a video diffusion model, FloED~\cite{floed} adds a flow branch and a flow-attention cache to cut the cost of doing so, and OmnimatteZero~\cite{omnimattezero} removes objects training-free by reusing a pretrained one, at a compute cost that is itself an active concern~\cite{efficient_video_diffusion}. Inference-time guidance keeps the weights frozen and instead steers the sampling trajectory, as GradPaint~\cite{gradpaint} does by backpropagating a coherence loss against the known region at every denoising step. Table~\ref{tab:taxonomy} places these methods alongside the two families above, where no prior entry both finds its own region and distinguishes the content it puts there: those that produce content receive the region from elsewhere, and those that find a region produce none. AVR does both, deriving its region from motion-gated detection and then dividing that region by evidence rather than filling it uniformly.

\section{Methodology}
\label{sec:method}

\paragraph{Problem Formulation.}
Given a video $X=\{x_1,\dots,x_n\}$ with frames $x_i\in\mathbb{R}^{H\times W\times 3}$, we seek a training-free mapping from the observed video to its restored counterpart,
\begin{equation}
X' = \mathcal{G}\big(X,\,\mathcal{D}(X)\big),
\label{eq:pipeline}
\end{equation}
where the \emph{anomaly-aware localization} module $\mathcal{D}$ outputs a spatio-temporal mask $M=\{m_1,\dots,m_n\}$, $m_i\in\{0,1\}^{H\times W}$, and the \emph{conditional restoration} module $\mathcal{G}$ replaces the masked content while preserving unmasked pixels. Both reuse frozen pretrained parameters. Two properties shape what $\mathcal{G}$ can be. It must operate \emph{without a reference}, since at inference the clean video behind the anomaly does not exist, and the region it edits is not homogeneous: writing $U=\{p:\exists\,i,\;m_i(p)=1\}$ for the pixels the anomaly ever occupies, part of $U$ is visible in some other frame of the clip and the rest is not:
\begin{equation}
U_{\text{obs}}=\{p\in U:\exists\,i,\;m_i(p)=0\},\qquad
U_{\text{gen}}=U\setminus U_{\text{obs}}.
\label{eq:partition}
\end{equation}
Only $U_{\text{gen}}$ has to be invented, since $U_{\text{obs}}$ can be recovered from observation. Restorers that propagate across frames exploit part of $U_{\text{obs}}$ implicitly, but none makes the split explicit or conditions the choice of restorer on it. The split is the exact boundary of what observation determines.
\begin{proposition}
\label{prop:evidence}
Fix the masks, let $U\neq\emptyset$, and let the unmasked observations on $U$ agree with a background that is constant in time, with each channel in $[0,r]$. Among all estimators that read only unmasked pixels, the smallest worst-case mean squared error on $U$, summed over the three channels, equals $\tfrac{3}{4}r^2\,|U_{\text{gen}}|/|U|$, and it is attained by copying each pixel of $U_{\text{obs}}$ from a frame where it is unmasked and returning the midpoint of the range on $U_{\text{gen}}$.
\end{proposition}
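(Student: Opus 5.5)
The plan is a standard minimax argument: an upper bound from the proposed estimator, and a matching lower bound from a two-point adversary on each pixel of $U_{\text{gen}}$. I would first make the setting explicit. The unknown is a background $b\in[0,r]^{3\times|U|}$, one constant-in-time value per pixel of $U$ and per channel. An estimator is any map $\hat b$ from the unmasked observations $\{x_i(p): m_i(p)=0\}$ to $[0,r]^{3\times |U|}$ (or to $\mathbb{R}^{3\times|U|}$). The risk is
\[
R(\hat b,b)=\frac{1}{|U|}\sum_{p\in U}\sum_{c=1}^{3}\bigl(\hat b_c(p)-b_c(p)\bigr)^2,
\]
and the target quantity is $\inf_{\hat b}\sup_b R(\hat b,b)$. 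Two structural facts drive everything. A pixel $p\in U_{\text{obs}}$ has some frame $i$ with $m_i(p)=0$, so by the agreement hypothesis $x_i(p)=b(p)$ is observed exactly. A pixel $p\in U_{\text{gen}}$ is masked in every frame, so no observation depends on $b(p)$.

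\textbf{Upper bound.} Take the stated estimator: copy $x_i(p)$ for any unmasked frame $i$ when $p\in U_{\text{obs}}$, and return $r/2$ in each channel when $p\in U_{\text{gen}}$. Its error is zero on $U_{\text{obs}}$. On $U_{\text{gen}}$ each channel contributes $(r/2-b_c(p))^2\le r^2/4$, with equality at $b_c(p)\in\{0,r\}$. Summing gives worst-case risk exactly $\tfrac34 r^2|U_{\text{gen}}|/|U|$.

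\textbf{Lower bound.} This is the step needing care, because an arbitrary estimator may couple pixels, using observed values on $U_{\text{obs}}$ to guess $U_{\text{gen}}$.
\begin{itemize}
\item Fix any estimator $\hat b$ and any values on $U_{\text{obs}}$.
\item Since the observations are then fully determined and do not depend on $b|_{U_{\text{gen}}}$, the output $\hat b(p)$ for $p\in U_{\text{gen}}$ is a fixed vector $a(p)$, independent of the background on $U_{\text{gen}}$.
\item The adversary then chooses, independently per pixel and channel, $b_c(p)=0$ if $a_c(p)\ge r/2$ and $b_c(p)=r$ otherwise. This forces $(a_c(p)-b_c(p))^2\ge r^2/4$ on every channel of every pixel of $U_{\text{gen}}$.
\item Summing gives $\sup_b R(\hat b,b)\ge \tfrac34 r^2|U_{\text{gen}}|/|U|$ for every $\hat b$.
\end{itemize}
This matches the upper bound and proves both the value and its attainment. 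An equivalent route, if randomized estimators are allowed, is to put a uniform prior on $\{0,r\}$ per coordinate of $U_{\text{gen}}$: the Bayes risk is $r^2/4$ per coordinate, and the same bound follows.

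I expect the main obstacle to be stating the observation model precisely enough that the independence claim in the second step is rigorous. The data must be exactly the unmasked pixels, whose values agree with $b$ on $U$ and are arbitrary but $b$-independent off $U$. I would spell out that the observation is a function of $b|_{U_{\text{obs}}}$ and the off-$U$ content alone. I would also note the role of $U\neq\emptyset$, which is only to make the normalization by $|U|$ well defined. Finally I would note that minimizers need not be unique: any value can be used on $U_{\text{gen}}$ channels only at the midpoint, so the midpoint is in fact the unique worst-case-optimal choice there.
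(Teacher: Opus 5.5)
Your proof is correct and is essentially the paper's. Both use the copy-and-midpoint estimator for the upper bound. Both base the lower bound on the fact that the output on $U_{\text{gen}}$ cannot depend on the background there: the paper sums the risks of the all-$0$ and all-$r$ backgrounds via $t^2+(t-r)^2\ge r^2/2$, while you pick the farther endpoint coordinate by coordinate, which is an inessential variant (the paper's averaging form also covers randomized estimators directly, which your Bayes remark supplies). Your closing uniqueness aside contradicts itself as written, but the statement does not need it.
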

\begin{proof}
The estimator is exact on $U_{\text{obs}}$, since every unmasked frame reports the same background there, and on $U_{\text{gen}}$ each channel lies in $[0,r]$, so the midpoint is off by at most $r/2$ and the squared error per pixel is at most $\tfrac34 r^2$. For the lower bound, fix any estimator and let the backgrounds $b^{(0)}$ and $b^{(1)}$ agree with the observation on $U_{\text{obs}}$ while taking the values $0$ and $r$ on every channel of $U_{\text{gen}}$. Both induce the same observation, so the estimator returns a single $\hat b$ for the two, and $t^2+(t-r)^2\ge r^2/2$ gives
\begin{equation*}
R(\hat b,b^{(0)})+R(\hat b,b^{(1)})\;\ge\;\tfrac{3}{2}r^2\,|U_{\text{gen}}|/|U|,
\end{equation*}
so the worse of the two matches the upper bound. Both degenerate cases follow at once, since one of the two sums over $U$ is then empty and the argument is unchanged.
\end{proof}
A copy is therefore provably exact on $U_{\text{obs}}$, whereas on $U_{\text{gen}}$ a prior is all that remains, and the ratio itself is fixed by the anomaly rather than by any choice of restorer.
\begin{corollary}
\label{cor:window}
Enlarging the set of frames leaves $|U_{\text{gen}}|/|U|$ non-increasing, so the bound of Prop.~\ref{prop:evidence} never worsens with a longer clip. If an object occludes the same pixels throughout the window, however, the ratio stays at one for every sub-window, and a longer clip changes nothing.
\end{corollary}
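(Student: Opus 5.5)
The plan is to compare the partition \eqref{eq:partition} on two nested frame sets directly, and then read off the consequence for the bound of Prop.~\ref{prop:evidence}. I fix the masks as functions of the frame index and consider index sets $F\subseteq F'$. Each index set determines its own $U_F=\{p:\exists\,i\in F,\ m_i(p)=1\}$ and $U_{\text{gen},F}=\{p\in U_F:\forall\,i\in F,\ m_i(p)=1\}$. The key observation is that $U_{\text{gen}}$ is simply the set of pixels masked in every frame of the window, intersected with $U$. Since $F$ is nonempty, such a pixel is automatically in $U$, so
\begin{equation*}
U_{\text{gen},F}=\bigcap_{i\in F}\{p:m_i(p)=1\}.
\end{equation*}

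The two monotonicity facts then follow from elementary set inclusions. First, $U_F$ is a union over $F$, so $U_F\subseteq U_{F'}$ and $|U_F|\le|U_{F'}|$. Second, $U_{\text{gen},F}$ is an intersection over $F$, so $U_{\text{gen},F'}\subseteq U_{\text{gen},F}$ and $|U_{\text{gen},F'}|\le|U_{\text{gen},F}|$. Assuming $U_F\neq\emptyset$, as the proposition requires, the denominator can only grow and the numerator can only shrink, so
\begin{equation*}
\frac{|U_{\text{gen},F'}|}{|U_{F'}|}\le\frac{|U_{\text{gen},F}|}{|U_F|}.
\end{equation*}
Because the optimal worst-case risk in Prop.~\ref{prop:evidence} is $\tfrac34 r^2$ times this ratio, it is non-increasing as well. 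The one point needing care is that the proposition is applied on each window with its own $U$. The hypothesis of a time-constant background agreeing with the unmasked observations must therefore hold on the larger window, and I would state this explicitly.

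For the second claim, suppose $m_i=\mathbf 1_S$ for a fixed nonempty $S$ and every $i$ in the window. For any nonempty sub-window $F$, the union and the intersection above both equal $S$. Hence $U_F=U_{\text{gen},F}=S$, $U_{\text{obs},F}=\emptyset$, and the ratio equals one. The bound therefore stays at $\tfrac34 r^2$ regardless of window length; this is the degenerate case already covered in the proof of Prop.~\ref{prop:evidence}.

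I do not expect a real obstacle. The only subtlety is the first step: seeing that the ratio cannot increase even though enlarging the window may also enlarge $U$. This works because the numerator and denominator move in opposite, favourable directions.
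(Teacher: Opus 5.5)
Your proof is correct and follows the same route as the paper's one-line justification: a pixel left unmasked by any frame leaves $U_{\text{gen}}$ for good, so the ratio can only fall, and a never-uncovered object keeps $U_{\text{obs}}$ empty in every sub-window. Your union/intersection formulation is slightly more complete than the paper's, because it shows explicitly that the pixels newly added to $U$ by extra frames land in $U_{\text{obs}}$ and so only enlarge the denominator; the paper leaves this point implicit.
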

Any frame that leaves a pixel unmasked moves it from $U_{\text{gen}}$ to $U_{\text{obs}}$ and never back, so the ratio can only fall, whereas an object that is never once uncovered leaves $U_{\text{obs}}$ empty for every sub-window of the given clip.

\begin{figure*}[tbh]
\centering
\includegraphics[width=0.94\textwidth]{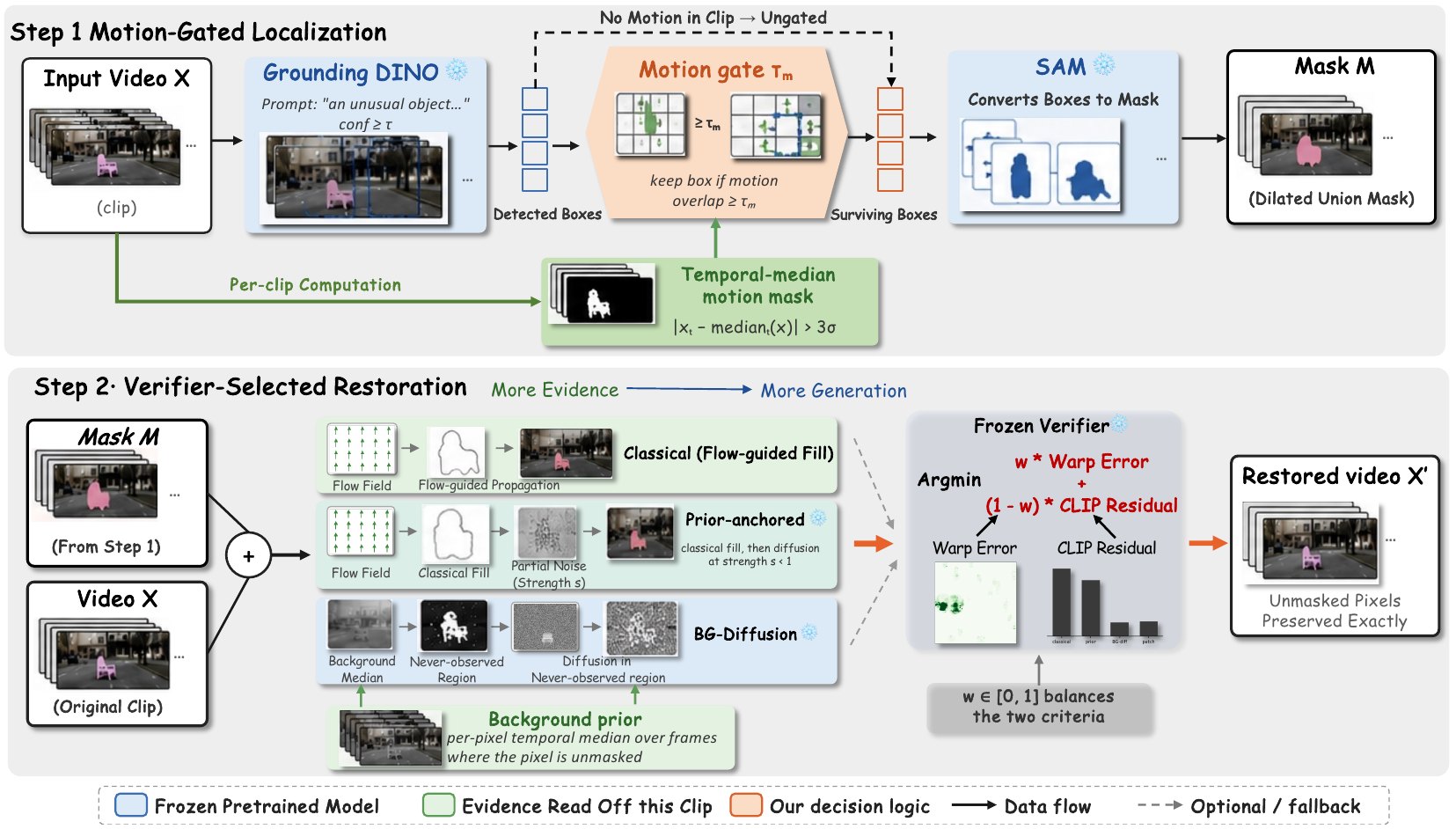}
\caption{\textbf{The AVR pipeline.} Step 1 turns a detection into the mask $M$, keeping open-vocabulary boxes only where motion measured on the clip corroborates them. Step 2 turns $M$ back into video, running three restorers ordered from copying observed pixels to synthesising never-observed ones and letting a frozen verifier pick one per clip. Pixels outside $M$ are untouched, and no component is trained.}
\label{fig:overview}
\vspace{-6pt}
\end{figure*}
\paragraph{Overview.}
Fig.~\ref{fig:overview} shows the workflow. Stage one grounds an anomaly prompt on every frame and keeps only the proposals that motion evidence supports, which turns a detector output into the spatio-temporal mask $M$. Stage two computes a background prior from the clip itself, fills $M$ from that prior wherever the clip ever exposed the pixel and from diffusion on the remainder that Prop.~\ref{prop:evidence} shows no estimator can recover, blends each restored frame against its flow-warped predecessor, and lets a frozen verifier choose among the candidates. Neither stage fits a parameter to the target scene, so the only quantity that changes from clip to clip is the evidence that is contained in the clip itself.

\paragraph{Anomaly-Aware Localization.}
Anomalies are open-ended by definition, so the detector cannot be a closed-set classifier. For each frame $x_i$, an open-vocabulary detector (Grounding DINO~\cite{groundingdino}) grounds a fixed anomaly prompt $P$ and keeps boxes above a confidence threshold $\tau$, $\mathcal{B}_i=\{b:\operatorname{conf}(b)\ge\tau\}$. A promptable segmenter (SAM~\cite{sam}) converts the boxes into pixel masks, whose union, after dilation $\rho_\delta$, forms the frame mask
\begin{equation}
m_i = \rho_\delta\Big(\textstyle\bigcup_{b\in \mathcal{B}_i}\operatorname{SAM}(x_i,b)\Big),
\label{eq:mask}
\end{equation}
with $m_i=\mathbf{0}$ when no box survives. Open-vocabulary grounding alone is noisy on surveillance footage, firing on benign scene structure. We therefore \emph{gate proposals by motion evidence}: a temporal-median background subtraction marks every pixel whose deviation exceeds the frame's mean deviation by $\kappa$ standard deviations, and a box is kept only if at least a fraction $\tau_m$ of its area is covered by that motion mask. A clip with no motion evidence anywhere falls back to ungated detection, so gating can shrink a mask but never empty one. Gating keeps the promptable, open-world interface of grounding intact, while discarding each proposal that the video itself does not support.

\paragraph{Background-Conditioned Restoration.}
The clip itself provides a strong prior for what the occluded background should look like. We form a background image $\bar{B}$ by taking, at every pixel, the temporal median over the frames where that pixel is \emph{not} masked. Writing $u$ for the indicator of $U_{\text{gen}}$ in Eq.~\eqref{eq:partition}, the pixels observed in no frame at all, the restoration composites three sources:
\begin{equation}
\hat{x}_i = (1-m_i)\odot x_i + m_i\odot\big[(1-u)\odot\bar{B} + u\odot \mathcal{G}_{\mathrm{diff}}(c_i, u)\big],
\label{eq:restore}
\end{equation}
where $c_i=(1-m_i)\odot x_i+m_i\odot\bar{B}$ is the frame with the prior already composited into the mask and $\mathcal{G}_{\mathrm{diff}}$ is a frozen inpainting diffusion model (Stable Diffusion inpainting, built on latent diffusion~\cite{ldm}) run under a neutral background prompt. Every pixel the clip ever exposes is copied from $\bar{B}$, and open-ended generation is invoked only on the set where, by Prop.~\ref{prop:evidence}, no estimator could do better.

\input{tables/tab_restoration}

\paragraph{Prior-Anchored Refinement.}
Copying alone fails when the exposed background is only partly observed, and free generation ignores what was observed. A third restorer sits between the extremes: we fill the mask with a flow-guided classical operator in the spirit of exemplar-based completion~\cite{criminisi2004exemplar}, then run diffusion on that fill at strength $s<1$, so sampling starts from a partially noised classical fill rather than pure noise~\cite{sdedit}. Diffusion then \emph{refines} an existing estimate instead of inventing one. DiffuEraser~\cite{diffueraser} learns the same anchoring, whereas frozen weights leave this one candidate among several rather than the fixed answer.

\paragraph{Verifier-Guided Selection.}
No single restorer wins on every clip: the classical operator is safest under clean motion, the background prior when the scene is mostly static, and diffusion where no observation exists. Rather than fixing this choice at design time, we run all three candidates and let a frozen verifier pick one per clip. We refer to the three as Classical (the flow-guided fill alone), Prior-anchored (that fill refined by diffusion) and BG-diffusion (the background-conditioned fill of Eq.~\eqref{eq:restore}). The verifier is reference-free, selecting over the pool with two no-reference signals scored inside the mask:
\begin{equation}
k^\star=\arg\min_{k}\; w\,\widetilde{E}_{\mathrm{warp}}^{(k)}+(1-w)\,\widetilde{R}_{\mathrm{CLIP}}^{(k)},
\label{eq:verifier}
\end{equation}
where $k$ indexes the pool, $w\in[0,1]$ balances flow warping error against the mean CLIP~\cite{clip} anomaly residual inside the mask, and $\widetilde{\cdot\,}$ denotes min--max normalization across the pool. The cost is paid at inference rather than in training~\cite{inference_scaling}, and neither the candidates nor the verifier has any parameter that is fitted on the target data at any stage.

\paragraph{Temporal Consistency.}
Restoring each frame on its own leaves the mask free to change content between neighbours, which reads as flicker. Two mechanisms enforce coherence. First, all frames share a common diffusion noise seed, a lightweight analogue of the correlated noise used in zero-shot video generation~\cite{text2video_zero}. Second, each restored frame is blended, inside the mask, with its predecessor warped forward by dense optical flow $\mathcal{W}_{i-1\rightarrow i}$ (Farneb\"ack):
\begin{equation}
x'_i = m_i \odot\big[\lambda\,\hat{x}_i + (1-\lambda)\,\mathcal{W}_{i-1\rightarrow i}(x'_{i-1})\big] + (1-m_i)\odot x_i,
\label{eq:temporal}
\end{equation}
for $i\ge 2$, with $x'_1=\hat{x}_1$. The blend weight $\lambda$ trades spatial fidelity to the current restoration against temporal fidelity to the propagated history, and setting $\lambda{=}1$ removes the blend altogether and restores every frame independently.

\section{Experiments and Results}
\label{sec:exp}

\paragraph{Implementation Details.}
We adopt public checkpoints throughout and keep every one of them frozen. Localization runs Grounding DINO-tiny~\cite{groundingdino} under the fixed prompt ``\emph{an unusual object.\ an anomaly.\ a foreign object.}'' followed by SAM ViT-B~\cite{sam} or SAM2-hiera-large~\cite{sam2}, and restoration runs Stable Diffusion v1.5 inpainting~\cite{ldm} at $512^2$ with 25 denoising steps and guidance 7.5. A single Farneb\"ack estimator~\cite{farneback2003} supplies all optical flow, serving the temporal blend, the warping error and the flow-guided fill alike, while the verifier scores candidates with CLIP ViT-B/32~\cite{clip}. We fix $\tau{=}0.25$, $\tau_m{=}0.35$, $\kappa{=}3$, $s{=}0.7$, $w{=}0.5$ and $\lambda{=}0.5$, dilate masks by $\delta{=}5$ pixels after detection and $7$ before inpainting, and implement everything in PyTorch on two NVIDIA RTX A5500 GPUs each with 24\,GB of memory.

\paragraph{Benchmarks and Metrics.}
Real anomalies come with no clean reference, so we build anomaly-injection benchmarks from the normal portions of CUHK Avenue~\cite{avenue}, UCSD Ped2~\cite{ped2} and ShanghaiTech~\cite{shanghaitech}, each holding 60 non-overlapping clips (16 frames, $256^2$) split evenly over a static object, a moving object and a local appearance flicker, with the pre-injection clip as reference. \emph{Fidelity}: full-frame PSNR, SSIM and LPIPS~\cite{lpips}, plus PSNR$_{\text{reg}}$ inside the ground-truth region, which covers only ${\sim}0.5$--$1.2\%$ of the frame. \emph{Temporal behaviour}, which framewise fidelity misses~\cite{szeto2022devil}: tLP, Farneb\"ack warping error, and FVD~\cite{fvd} on R3D-18 Kinetics features, shared by every FVD number here and so not comparable to published I3D values. \emph{Anomaly removal}: Mask IoU, and residual drop (RD), which uses no text prompt and no cropping. Each video supplies its own temporal-median reference, both it and every frame are cut into a $4\times4$ grid and embedded with frozen CLIP ViT-B/32~\cite{clip}, and a cell scores one minus the cosine similarity. With $a(V)$ the mean of that map inside the region, RD is the relative reduction $\big(a(X)-a(X')\big)/a(X)$. Rescoring with DINO preserves every ranking and sign, so RD does not rest on the verifier's backbone. We also run 20 anomalous ShanghaiTech \emph{test} clips, which injection cannot cover, under the no-reference measures alone.

\input{tables/tab_pipeline}
\paragraph{Restoration Quality.}
Table~\ref{tab:main_inpainting} compares restorers under oracle masks, so the trained inpainters ProPainter, E2FGVI and FloED~\cite{floed} differ from AVR only in how they fill the hole. AVR attains the best full-frame PSNR and FVD on all three benchmarks, and inside the region it leads on ShanghaiTech and stays within a decibel of the strongest trained inpainter elsewhere. Anchoring the fill to the clip's own observed background therefore matches dedicated training on the pixels it edits. DiffuEraser and OmnimatteZero trail on every metric, exhibiting exactly the content hallucination that this evaluation protocol is built to expose.

\begin{figure}[tbh]
\centering
\includegraphics[width=1\columnwidth]{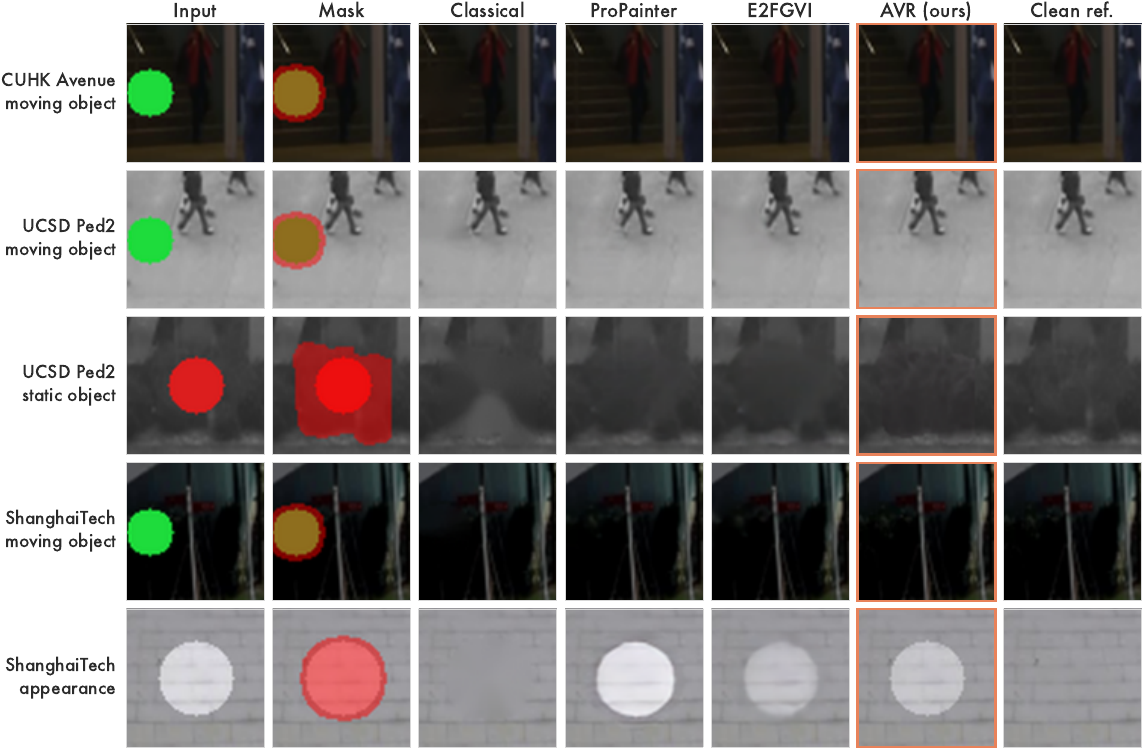}
\caption{End-to-end restoration under detected masks. All restorers share the same motion-gated masks, one median clip per row.}
\label{fig:qual_e2e}
\vspace{-14pt}
\end{figure}

\paragraph{End-to-End Comparison.}
Table~\ref{tab:main_pipeline} evaluates the full pipeline with detection included, and Fig.~\ref{fig:qual_e2e} shows the same masks and restorers clip by clip. AVR gains 17.9\,dB over the Grounded-Stable-Diffusion (Grounded-SD) base on ShanghaiTech, turns the residual drop positive, and leads ProPainter, E2FGVI and FloED on PSNR and SSIM on every benchmark while alone producing the masks it is scored on. The two ProPainter rows attribute that gain, since motion-gated masks by themselves lift its ShanghaiTech PSNR by 13\,dB. Restoration quality therefore follows the evidence that a given mask exposes rather than the raw capacity of whichever restorer happens to fill it.

\input{tables/tab_ablation}
\begin{figure}[tbh]
\centering
\includegraphics[width=.93\columnwidth]{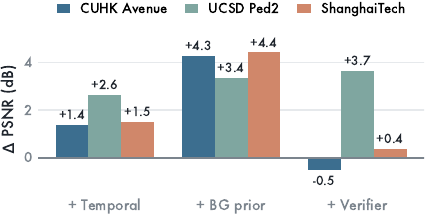}
\caption{PSNR that each mechanism adds to the preceding stage under oracle masks, from per-frame diffusion onward.}
\label{fig:temporal}
\vspace{-7pt}
\end{figure}

\begin{figure*}[tbh]
\centering
\includegraphics[width=0.9\textwidth]{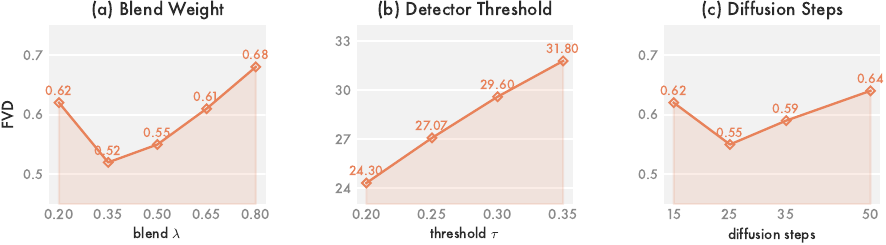}
\caption{FVD sensitivity on Avenue to blend $\lambda$, detector threshold $\tau$ and diffusion steps. Panel (b) is measured end to end while (a) and (c) use oracle masks, which puts its FVD an order of magnitude higher.}
\label{fig:hparam}
\vspace{-8pt}
\end{figure*}

\paragraph{Ablations.}
Table~\ref{tab:ablation} traces the end-to-end gains to their two sources. Motion gating carries fidelity, raising PSNR by 5.2--15.3\,dB over the SAM2 detector, while verifier selection acts on removal, turning ShanghaiTech RD positive at nearly unchanged PSNR. The two are therefore complementary rather than redundant. Under oracle masks, Fig.~\ref{fig:temporal} separates the restorer's stages, and the background prior is the largest contributor on every benchmark.

\paragraph{Detector and Hyperparameters.}
Table~\ref{tab:detection} varies the detector with the restorer held fixed, including TAO~\cite{tao}, which tracks anomalous objects into a mask. Motion gating leads every alternative on every metric on both benchmarks, so demanding independent motion evidence behind each semantic proposal beats open-vocabulary grounding and tracking alike. Fig.~\ref{fig:hparam} sweeps $\lambda$, $\tau$ and the diffusion steps, where FVD is minimized near $\lambda\approx0.35$ while the $\tau$ trend reverses across the three benchmarks, so we keep one shared setting across every run and benchmark.

\input{tables/tab_detector}
\paragraph{Selection Analysis.}
Fig.~\ref{fig:select} shows how the verifier distributes its choices, and the split tracks mask quality rather than dataset. Under oracle masks it picks Classical or Prior-anchored on 35--56\% of clips, while under detected masks, which may cover clean background, it retreats to BG-diffusion on 87--93\%. The pool is therefore not redundant, and the verifier turns conservative exactly when localization is unreliable. Against a per-clip oracle it recovers most of the remaining headroom on Ped2, where it beats every fixed candidate, and a smaller share on ShanghaiTech, while on Avenue one candidate dominates nearly every clip and leaves little to select. The last row of Fig.~\ref{fig:qual_e2e} shows the residual failure mode, where the classical candidate clears an appearance change that the verifier declines.

\begin{figure}[tbh]
\centering
\includegraphics[width=0.86\columnwidth]{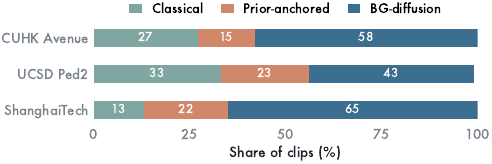}
\caption{Restorer chosen by the frozen verifier, as a share of the 60 oracle-mask clips available in each benchmark.}
\label{fig:select}
\vspace{-8pt}
\end{figure}

\paragraph{Efficiency.}
Table~\ref{tab:efficiency} reports inference cost per 16-frame clip and per frame, with the pipeline split into stages and totals under the same masks. The two diffusion candidates dominate, the verifier adds under four seconds, and AVR runs at roughly 2.4 times its single-restorer variant, while ProPainter is about eight times faster. This is the price of selection, paid at inference rather than training.

\input{tables/tab_efficiency}
\paragraph{Real Anomalies.}
Table~\ref{tab:real} leaves the injection protocol entirely. On 20 genuine ShanghaiTech anomalies AVR cuts flicker in the removed region more than six-fold, brings warping error below the input video's own, and reaches a positive residual drop, while the base degrades on all three counts. An evidence-anchored fill therefore generalizes to anomalies the protocol never synthesized. Re-running the detector over the restored clips agrees, with the fraction of frames still firing dropping from 1.00 to 0.42.
\input{tables/tab_real}

\paragraph{Failure Analysis.}
Fig.~\ref{fig:bytype} splits RD by anomaly type and locates every negative number in the paper. Moving objects and local appearance changes are removed cleanly on all benchmarks, while static objects are negative everywhere. The cause is structural: a static object occludes the same pixels in every frame, so $U_{\text{gen}}$ grows to fill $U$ in Eq.~\eqref{eq:partition} and Eq.~\eqref{eq:restore} degenerates to unconstrained generation, which is Prop.~\ref{prop:evidence} at $|U_{\text{gen}}|/|U|=1$. Detection shows the same asymmetry, since most clips with no surviving proposal are static objects, which supply neither the motion evidence that admits them nor the observations to repair them.

\begin{figure}[tbh]
\centering
\includegraphics[width=0.95\columnwidth]{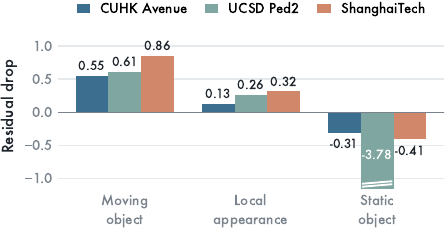}
\caption{Residual drop by anomaly type, end to end on all three benchmarks, with the axis truncated at $-1.15$.}
\label{fig:bytype}
\vspace{-8pt}
\end{figure}

\paragraph{Case Study.}
Fig.~\ref{fig:caseadv} places AVR beside ProPainter on identical detected masks with a clean reference. A moving object exposes its background in other frames and the prior recovers it outright, an appearance change is only partly reversed by either restorer, and a static object by neither, since half of it falls outside the mask and none is ever uncovered. Fig.~\ref{fig:qual} turns to real anomalies, where per-frame diffusion removes a cyclist but invents a pedestrian in one frame and a potted plant twelve later, while AVR keeps the walkway stable. Each hallucination is locally plausible, the failure full-frame metrics miss and region scoring catches.

\begin{figure}[tbh]
\centering
\includegraphics[width=1\columnwidth]{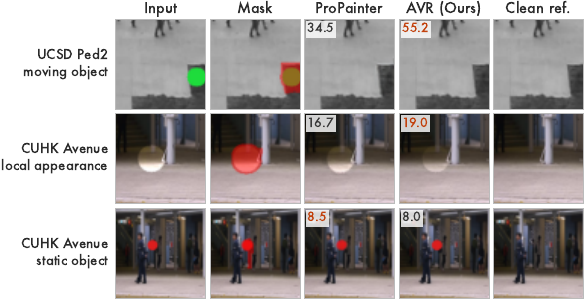}
\caption{Case study on injected anomalies, with ProPainter and AVR restoring identical detected masks. Numbers give the region PSNR in decibels obtained on that one particular clip.}
\label{fig:caseadv}
\end{figure}

\begin{figure}[tbh]
\centering
\includegraphics[width=.9\columnwidth]{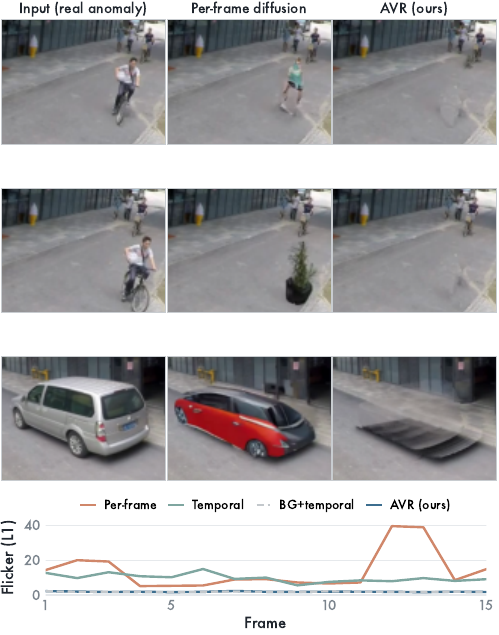}
\caption{Real anomalies under detected masks. Rows 1--2 show a cyclist with its flicker curve, and row 3 a \emph{parked} car.}
\label{fig:qual}
\end{figure}

\section{Conclusion}
\label{sec:conclusion}

This paper proposes AVR, a training-free framework that couples anomaly detection to restoration and generates content only where the clip offers no evidence to copy. A localization module gates open-vocabulary proposals by motion evidence, and a restoration module fills the mask from a background prior, leaving a frozen verifier to decide how much remains to be generated. Extensive results on three surveillance benchmarks and on real anomalies demonstrate both the effectiveness and the superiority of AVR.

{
    \small
    \bibliographystyle{ieeenat_fullname}
    \bibliography{refs}
}

\end{document}

%% file: tables/tab_taxonomy.tex
\begin{table}[t]
\centering
\caption{Comparison of method families along the two inputs a detect-and-restore system requires. \emph{Content} follows the colour code of Fig.~\ref{fig:hero}: \icoClip~pixels copied from other frames of the same clip, \icoGen~pixels produced by a generative prior rather than observed.}
\label{tab:taxonomy}
\setlength{\tabcolsep}{4.5pt}
\renewcommand{\arraystretch}{1.15}
\resizebox{\columnwidth}{!}{%
\begin{tabular}{@{}ll cc c l@{}}
\toprule
& Method & Region & Content & Train & Temporal model \\
\midrule
\multirow{2}{*}{\rotatebox{90}{\scriptsize Detect}}
 & LAVAD~\cite{lavad}         & \icoDetect~Detector & ---            & \xmark & --- \\
 & SphereVAD~\cite{spherevad} & \icoDetect~Detector & ---            & \xmark & --- \\
\cmidrule(l{0pt}r{0pt}){2-6}
\multirow{2}{*}{\rotatebox{90}{\scriptsize Edit}}
 & T2V-Zero~\cite{text2video_zero} & \icoUser~User & \icoGen          & \xmark & Attention \\
 & TokenFlow~\cite{tokenflow}      & \icoUser~User & \icoGen          & \xmark & Feature prop. \\
\cmidrule(l{0pt}r{0pt}){2-6}
\multirow{4}{*}{\rotatebox{90}{\scriptsize Inpaint}}
 & E2FGVI~\cite{e2fgvi}              & \icoGiven~Given & \icoClip\,\icoGen & \cmark & Learned prop. \\
 & ProPainter~\cite{propainter}      & \icoGiven~Given & \icoClip\,\icoGen & \cmark & Learned prop. \\
 & DiffuEraser~\cite{diffueraser}    & \icoGiven~Given & \icoClip\,\icoGen & \cmark & Learned prop. \\
 & OmnimatteZero~\cite{omnimattezero}& \icoGiven~Given & \icoGen          & \xmark & Layer decomp. \\
\midrule
\rowcolor{gray!12}
 & \textbf{AVR (Ours)} & \icoDetect~\textbf{Detector} & \icoClip$\rightarrow$\icoGen & \xmark & Noise $+$ flow blend \\
\bottomrule
\end{tabular}%
}
\vspace{-4pt}
\end{table}

%% file: tables/tab_restoration.tex
\begin{table*}[tbh]
\centering
\caption{Restoration under oracle masks. PSNR$_{\text{reg}}$ is measured inside the anomaly region, \textbf{bold} for best and \underline{underlined} for second best.}
\label{tab:main_inpainting}
\resizebox{.95\textwidth}{!}{%
\begin{tabular}{l cccc cccc cccc}
\toprule
 & \multicolumn{4}{c}{CUHK Avenue} & \multicolumn{4}{c}{UCSD Ped2} & \multicolumn{4}{c}{ShanghaiTech} \\
\cmidrule(lr){2-5} \cmidrule(lr){6-9} \cmidrule(lr){10-13}
Method & PSNR$\uparrow$ & PSNR$_{\text{reg}}\uparrow$ & LPIPS$\downarrow$ & FVD$\downarrow$ & PSNR$\uparrow$ & PSNR$_{\text{reg}}\uparrow$ & LPIPS$\downarrow$ & FVD$\downarrow$ & PSNR$\uparrow$ & PSNR$_{\text{reg}}\uparrow$ & LPIPS$\downarrow$ & FVD$\downarrow$ \\
\midrule
Classical (Spatial) & 39.76 & 18.79 & 0.008 & 5.86 & 48.84 & 27.96 & \underline{0.004} & 2.78 & 52.09 & 31.22 & 0.004 & 1.97 \\
Classical (Flow) & 40.36 & 19.50 & 0.007 & 6.12 & 49.31 & 28.61 & \underline{0.004} & 2.51 & 51.88 & 31.26 & 0.004 & 2.30 \\
ProPainter~\cite{propainter} & 43.73 & 25.34 & \textbf{0.004} & 0.64 & 50.56 & 32.08 & \underline{0.004} & \underline{0.49} & 53.88 & 35.21 & \underline{0.003} & \underline{0.64} \\
E2FGVI~\cite{e2fgvi} & \underline{45.97} & \textbf{27.34} & \textbf{0.004} & \underline{0.57} & \underline{51.90} & \textbf{33.33} & \underline{0.004} & 0.53 & \underline{55.26} & \underline{36.41} & 0.004 & 0.67 \\
FloED~\cite{floed} & 45.39 & 26.19 & \underline{0.005} & 0.77 & 51.00 & 32.47 & \underline{0.004} & 0.51 & 54.35 & 35.95 & 0.004 & 0.76 \\
DiffuEraser~\cite{diffueraser} & 31.98 & 12.40 & 0.037 & 8.98 & 34.28 & 14.01 & 0.056 & 15.64 & 36.22 & 16.91 & 0.084 & 20.22 \\
OmnimatteZero~\cite{omnimattezero} & 31.74 & 11.31 & 0.038 & 12.18 & 32.24 & 12.13 & 0.050 & 15.95 & 35.23 & 15.72 & 0.051 & 18.62 \\
\midrule
\rowcolor{gray!12}
AVR (Ours) & \textbf{47.27} & \underline{27.17} & \textbf{0.004} & \textbf{0.55} & \textbf{52.63} & \underline{32.61} & \textbf{0.003} & \textbf{0.45} & \textbf{56.98} & \textbf{37.83} & \textbf{0.002} & \textbf{0.56} \\
\bottomrule
\end{tabular}%
}
\end{table*}

%% file: tables/tab_pipeline.tex
\begin{table*}[tbh]
\centering
\caption{End-to-end comparison. The \emph{base} row chains a per-frame detector with unconstrained inpainting, while ProPainter, E2FGVI and FloED restore the same motion-gated masks as AVR, hence identical Mask IoU. Best per column in \textbf{bold}, second best \underline{underlined}.}
\label{tab:main_pipeline}
\resizebox{.95\textwidth}{!}{%
\begin{tabular}{ll ccccccccc}
\toprule
Benchmark & Method & PSNR$\uparrow$ & PSNR$_{\text{reg}}\uparrow$ & SSIM$\uparrow$ & LPIPS$\downarrow$ & tLP$\downarrow$ & FVD$\downarrow$ & Warp$\downarrow$ & RD$\uparrow$ & IoU$\uparrow$ \\
\midrule
\multirow{6}{*}{CUHK Avenue} & Grounded-SD (base) & 29.00 & \textbf{15.78} & 0.972 & 0.048 & 0.036 & 46.13 & 1.93 & $-$1.202 & 0.097 \\
 & ProPainter (SAM2 masks) & 28.63 & 13.38 & 0.974 & 0.057 & 0.017 & 51.32 & 1.18 & $-$\underline{0.117} & 0.095 \\
 & ProPainter (gated masks) & 32.52 & 14.63 & 0.986 & 0.035 & \textbf{0.012} & 28.33 & \textbf{1.09} & $-$0.201 & \textbf{0.205} \\
 & E2FGVI (gated masks) & \underline{33.34} & \underline{15.05} & \underline{0.987} & \underline{0.033} & 0.014 & \underline{28.16} & 1.13 & $-$0.238 & \textbf{0.205} \\
 & FloED (gated masks) & 32.72 & 14.89 & 0.986 & 0.038 & 0.016 & 29.97 & 1.15 & $-$0.271 & \textbf{0.205} \\
 & \cellcolor{gray!12}AVR (Ours) & \cellcolor{gray!12}\textbf{33.79} & \cellcolor{gray!12}15.00 & \cellcolor{gray!12}\textbf{0.988} & \cellcolor{gray!12}\textbf{0.031} & \cellcolor{gray!12}\underline{0.013} & \cellcolor{gray!12}\textbf{27.07} & \cellcolor{gray!12}\underline{1.10} & \cellcolor{gray!12}$\bm{+}$\textbf{0.123} & \cellcolor{gray!12}\textbf{0.205} \\
\midrule
\multirow{6}{*}{UCSD Ped2} & Grounded-SD (base) & 33.25 & 21.53 & 0.984 & 0.022 & 0.043 & 22.38 & 1.64 & $-$6.796 & 0.288 \\
 & ProPainter (SAM2 masks) & 36.54 & 21.60 & 0.960 & 0.075 & 0.025 & 30.80 & 1.14 & $-$1.256 & 0.299 \\
 & ProPainter (gated masks) & 41.80 & 25.45 & \underline{0.995} & \underline{0.015} & \textbf{0.020} & \underline{6.04} & \underline{1.04} & $\bm{-}$\textbf{0.685} & \textbf{0.375} \\
 & E2FGVI (gated masks) & \underline{43.52} & \textbf{26.73} & \underline{0.995} & 0.016 & 0.023 & 6.70 & 1.08 & $-$\underline{0.763} & \textbf{0.375} \\
 & FloED (gated masks) & 41.96 & 25.81 & \underline{0.995} & 0.018 & 0.026 & 6.34 & 1.11 & $-$0.806 & \textbf{0.375} \\
 & \cellcolor{gray!12}AVR (Ours) & \cellcolor{gray!12}\textbf{44.00} & \cellcolor{gray!12}\underline{25.89} & \cellcolor{gray!12}\textbf{0.996} & \cellcolor{gray!12}\textbf{0.013} & \cellcolor{gray!12}\underline{0.021} & \cellcolor{gray!12}\textbf{5.86} & \cellcolor{gray!12}\textbf{1.03} & \cellcolor{gray!12}$-$0.970 & \cellcolor{gray!12}\textbf{0.375} \\
\midrule
\multirow{6}{*}{ShanghaiTech} & Grounded-SD (base) & 22.23 & 19.32 & 0.715 & 0.381 & 0.401 & 134.09 & 23.36 & $-$14.383 & 0.119 \\
 & ProPainter (SAM2 masks) & 25.50 & 16.48 & 0.831 & 0.320 & 0.031 & 104.61 & 2.17 & $-$0.648 & 0.135 \\
 & ProPainter (gated masks) & 38.53 & 20.88 & \underline{0.993} & \textbf{0.036} & \textbf{0.013} & \textbf{17.14} & \textbf{0.47} & $-$0.271 & \textbf{0.343} \\
 & E2FGVI (gated masks) & \underline{39.06} & \underline{21.94} & \underline{0.993} & \underline{0.037} & \underline{0.015} & 18.26 & 0.51 & $-$\underline{0.122} & \textbf{0.343} \\
 & FloED (gated masks) & 38.62 & 21.70 & \underline{0.993} & 0.040 & 0.016 & 18.41 & \underline{0.49} & $-$0.297 & \textbf{0.343} \\
 & \cellcolor{gray!12}AVR (Ours) & \cellcolor{gray!12}\textbf{40.08} & \cellcolor{gray!12}\textbf{22.91} & \cellcolor{gray!12}\textbf{0.994} & \cellcolor{gray!12}\textbf{0.036} & \cellcolor{gray!12}\textbf{0.013} & \cellcolor{gray!12}\underline{17.37} & \cellcolor{gray!12}\textbf{0.47} & \cellcolor{gray!12}$\bm{+}$\textbf{0.255} & \cellcolor{gray!12}\textbf{0.343} \\
\bottomrule
\end{tabular}%
}
\vspace{-6pt}
\end{table*}

%% file: tables/tab_ablation.tex
\begin{table}[tbh]
\centering
\caption{Ablation study. The SAM2 rows predate the background prior, so their pool holds two restorers instead of three. \textbf{Bold}: the best result, and \underline{underlined}: the second best.}
\label{tab:ablation}
\scriptsize
\setlength{\tabcolsep}{2.4pt}
\renewcommand{\arraystretch}{1.04}
\resizebox{\columnwidth}{!}{%
\begin{tabular}{ll cc cc cc}
\toprule
 & & \multicolumn{2}{c}{CUHK Avenue} & \multicolumn{2}{c}{UCSD Ped2} & \multicolumn{2}{c}{ShanghaiTech} \\
\cmidrule(lr){3-4} \cmidrule(lr){5-6} \cmidrule(lr){7-8}
Detector & Restorer & PSNR$\uparrow$ & RD$\uparrow$ & PSNR$\uparrow$ & RD$\uparrow$ & PSNR$\uparrow$ & RD$\uparrow$ \\
\midrule
SAM2 & Prior-anchored & 27.84 & $-$0.255 & 34.86 & $-$2.550 & 23.90 & $-$1.413 \\
SAM2 & Selection (2 cand.) & 28.59 & $-$0.071 & 37.25 & $-$\underline{1.220} & 24.79 & $-$0.870 \\
Motion-gated & BG-diffusion & \underline{33.19} & $\bm{+}$\textbf{0.145} & \underline{43.23} & $-$4.282 & \underline{39.74} & $-$\underline{0.072} \\
\rowcolor{gray!12}
Motion-gated & Selection (3 cand.) & \textbf{33.79} & $+$\underline{0.123} & \textbf{44.00} & $\bm{-}$\textbf{0.970} & \textbf{40.08} & $\bm{+}$\textbf{0.255} \\
\bottomrule
\end{tabular}%
}
\vspace{-8pt}
\end{table}

%% file: tables/tab_detector.tex
\begin{table}[tbh]
\centering
\caption{Detector study with the restoration module held fixed at BG-diffusion. Best per column in \textbf{bold}, second best \underline{underlined}.}
\label{tab:detection}
\scriptsize
\setlength{\tabcolsep}{2.2pt}
\renewcommand{\arraystretch}{1.04}
\resizebox{\columnwidth}{!}{%
\begin{tabular}{l cccc cccc}
\toprule
 & \multicolumn{4}{c}{CUHK Avenue} & \multicolumn{4}{c}{ShanghaiTech} \\
\cmidrule(lr){2-5} \cmidrule(lr){6-9}
Detector & PSNR$\uparrow$ & LPIPS$\downarrow$ & FVD$\downarrow$ & IoU$\uparrow$ & PSNR$\uparrow$ & LPIPS$\downarrow$ & FVD$\downarrow$ & IoU$\uparrow$ \\
\midrule
Heuristic & 31.07 & \underline{0.035} & \underline{20.67} & \underline{0.177} & 32.86 & \underline{0.058} & 34.63 & \underline{0.334} \\
Grounded-SAM & 30.60 & 0.045 & 24.79 & 0.097 & 29.15 & 0.251 & 95.35 & 0.119 \\
Grounded-SAM2 & 30.14 & 0.050 & 45.36 & 0.095 & 26.84 & 0.326 & 121.62 & 0.128 \\
TAO~\cite{tao} & \underline{32.12} & 0.038 & 23.78 & 0.117 & \underline{35.59} & 0.103 & \underline{25.23} & 0.257 \\
\rowcolor{gray!12}
Motion-gated (ours) & \textbf{33.19} & \textbf{0.032} & \textbf{20.17} & \textbf{0.205} & \textbf{39.74} & \textbf{0.047} & \textbf{20.47} & \textbf{0.343} \\
\bottomrule
\end{tabular}%
}
\end{table}

%% file: tables/tab_efficiency.tex
\begin{table}[tbh]
\centering
\caption{Inference cost on CUHK Avenue per 16-frame clip on an RTX A5500, under the same motion-gated masks. Clip medians, warm-up excluded, peak memory in GB.}
\label{tab:efficiency}
\resizebox{\columnwidth}{!}{%
\begin{tabular}{l ccc}
\toprule
 & Sec/clip & Sec/frame & Peak mem. \\
\midrule
Detection (Grounding DINO $+$ SAM) & 6.81 & 0.43 & \multirow{6}{*}{9.2} \\
Motion gate & 0.05 & $<$0.01 & \\
Classical & 0.21 & 0.01 & \\
Prior-anchored & 19.11 & 1.19 & \\
BG-diffusion & 20.74 & 1.30 & \\
Verifier & 3.72 & 0.23 & \\
\midrule
Grounded-SD (base) & 25.48 & 1.59 & 4.6 \\
ProPainter (gated masks) & 7.54 & 0.47 & 2.7 \\
AVR (single restorer) & 25.84 & 1.62 & 4.6 \\
\rowcolor{gray!12}
AVR (full) & 62.52 & 3.91 & 9.2 \\
\bottomrule
\end{tabular}%
}
\end{table}

%% file: tables/tab_real.tex
\begin{table}[t]
\centering
\caption{20 real ShanghaiTech test anomalies, evaluated without a reference. Flicker is measured inside each pipeline's own detected region, and ``impr.'' counts clips whose flicker decreased.}
\label{tab:real}
\scriptsize
\setlength{\tabcolsep}{3.4pt}
\renewcommand{\arraystretch}{1.0}
\resizebox{\columnwidth}{!}{%
\begin{tabular}{l cc cc c c}
\toprule
 & \multicolumn{2}{c}{Flicker$\downarrow$} & \multicolumn{2}{c}{Warp Err.$\downarrow$} & & \\
\cmidrule(lr){2-3} \cmidrule(lr){4-5}
Pipeline & before & after & before & after & RD$\uparrow$ & impr. \\
\midrule
Grounded-SD (base) & 12.77 & 27.71 & 0.95 & 3.22 & $-$0.513 & 7/20 \\
\rowcolor{gray!12}
AVR (Ours) & 21.12 & \textbf{3.32} & 0.95 & \textbf{0.84} & $\bm{+}$\textbf{0.721} & \textbf{20/20} \\
\bottomrule
\end{tabular}%
}
\end{table}